\newtheorem*{theorem*}{Theorem}
\newtheorem{theorem}{Theorem}
\newtheorem{definition}{Definition}
\newtheorem{assumption}{Assumption}
\newtheorem{proposition}{Proposition}
\newtheorem{remark}{Remark}
\newcommand{\bmb}[1]{\bar{\bm{#1}}}
\newcommand{\bmd}[1]{\dot{\bm{#1}}} 
\newcommand{\bmdd}[1]{\ddot{\bm{#1}}}
\newcommand{\bracketmat}[2]{ \left[ \begin{array}{#1} #2 \end{array} \right] }
\title{\LARGE \bf
Grasp Constraint Satisfaction for Object Manipulation using Robotic Hands
}
\author{Wenceslao Shaw-Cortez, Denny Oetomo, Chris Manzie, and Peter Choong
\thanks{W. Shaw-Cortez, D. Oetomo, and C. Manzie are with the MIDAS Laboratory in the School of Electrical, Mechanical, and Infrastructure Engineering, 
	University of Melbourne, 3010, Australia,
        {\tt\small shaww@student.unimelb.edu.au, doetomo@unimelb.edu.au, manziec@unimelb.edu.au}.}
 \thanks{ P. Choong is with the Department of Surgery, St. Vincent's Hospital, 3065,
 		Australia,
 		{\tt\small  pchoong@unimelb.edu.au}.}
}
\begin{document}

\maketitle
\thispagestyle{empty}
\pagestyle{empty}

\begin{abstract}

For successful object manipulation with robotic hands, it is important to ensure that the object remains in the grasp at all times. In addition to grasp constraints associated with slipping and singular hand configurations, excessive rolling is an important grasp concern where the contact points roll off of the fingertip surface. Related literature focus only on a subset of grasp constraints, or assume grasp constraint satisfaction without providing guarantees of such a claim. In this paper, we propose a control approach that systematically handles all grasp constraints. The proposed controller ensures that the object does not slip, joints do not exceed joint angle constraints (e.g. reach singular configurations), and the contact points remain in the fingertip workspace. The proposed controller accepts a nominal manipulation control, and ensures the grasping constraints are satisfied to support the assumptions made in the literature. Simulation results validate the proposed approach.

\end{abstract}

\section{Introduction}

A primary objective of robotic hand research is to manipulate the environment to achieve a desired goal \cite{Bicchi2000}. This can be accomplished in a hierarchical grasp framework where a high-level planner plans the grasp, forms the grasp, and manipulates the object \cite{Hang2016, Lippiello2013a}. The focus of this work is in the manipulation aspect of the hierarchical approach, which consists of rotating/translating the object within the grasp, and hereafter is referred to as in-hand manipulation. In-hand manipulation controllers are used to track a desired object reference trajectory, while keeping the object inside the grasp \cite{Ozawa2017}. Thus for successful in-hand manipulation, it is paramount to guarantee that the object remains in the grasp during the manipulation motion.

A failed grasp can result from slipping, joint over-extension, and excessive rolling. Slipping is an obvious grasping concern, which has been formulated as a constraint satisfaction problem and extensively addressed in the literature \cite{ShawCortez2018b, Caldas2015, Nahon1992}. Joint over-extension relates to joints exceeding feasible joint angle constraints (e.g joint workspace, hardware capabilities, singular hand configurations), which inhibits the robotic hand from applying necessary contact forces to prevent grasp failure \cite{Murray1994}. Excessive rolling, as defined here, is when the contact points roll off of the fingertip surface. In-hand manipulation inherently relies on rolling motion for object manipulation \cite{Montana1988}. However excessive rolling motion may cause the contact points to leave the fingertip surface. The fingertip surface can refer to the the sensor surface or the entire surface of the fingertip such that excessive rolling compromises the ability to manipulate the object. For successful manipulation, the manipulation controller must guarantee grasp constraint satisfaction such that the contact points remain in the fingertip workspace, whilst simultaneously ensuring no slip and no over-extension.

To date, there exist an abundance of multi-fingered robotic hands, including tactile sensors, along with methods of controlling them \cite{Ozawa2017, Kappassov2015}.  However, many manipulation controllers focus on the object manipulation task and assume grasp constraint satisfaction, but provide no guarantees to support that assumption \cite{Kawamura2013, Wimbock2012}. Other methods focus on one grasp constraint, such as no slipping, but do not systematically consider all grasp constraints \cite{Caldas2015, Nahon1992}. Existing methods of addressing all grasping constraints are via motion planning approaches \cite{Hertkorn2013, Han2000, Cherif1999}, however they rely on quasi-static assumptions that generally do not hold in a dynamic manipulation setting. Furthermore those approaches require large computational resources and may not be conducive to real-time applications \cite{Hertkorn2013}. In the hierarchical grasp framework, it is difficult to determine a priori the region of attraction of the in-hand manipulation controller as it depends on both the object and initial grasp configuration. Therefore a given reference trajectory from the high-level planner may result in grasp constraint violation and thus grasp failure. There is no existing method to prevent excessive rolling, no slip, and over-extension for real-time object manipulation.

In this paper, we present a novel controller that systematically handles no slipping, over-extension, and excessive rolling constraints for object manipulation. The proposed controller admits a nominal manipulation control, as found in \cite{Ozawa2017}, and outputs a minimally close, in the 2-norm sense, manipulation control torque, while adhering to the grasping constraints. The effectiveness of the proposed controller is validated in simulation.

\subsection*{Notation}

Throughout this paper, an indexed vector $\bm{v}_i \in \mathbb{R}^p$ has an associated concatenated vector $\bm{v} \in \mathbb{R}^{pn}$, where the index $i$ is specifically used to index over the $n$ contact points in the grasp. The notation $\bm{v}^{\mathcal{\mathcal{F}}}$ indicates that the vector $\bm{v}$ is written with respect to a frame $\mathcal{F}$, and if there is no explicit frame defined, $\bm{v}$ is written with respect to the inertial frame, $\mathcal{P}$.  The operator $(\cdot)\times$ denotes the skew-symmetric matrix representation of the cross-product. The $k\times k$ identity matrix is denoted $I_{k\times k}$. The Moore-Penrose inverse of a matrix $B$ is denoted by $B^\dagger$.

\section{Background} \label{sec:system model}

\subsection{Hand-Object System}

Consider a fully-actuated, multi-fingered hand grasping a rigid, convex object at $n \in \mathbb{Z}_{>0}$ contact points.  Each finger consists of $m_i \in \mathbb{Z}_{>0}$ joints with smooth, convex fingertips of high stiffness. Let the finger joint configuration be described by the joint angles, $\bm{q}_i \in \mathbb{R}^{m_i}$. The full hand configuration is defined by the joint angle vector, $\bm{q} = (\bm{q}_1, \bm{q}_2, ..., \bm{q}_n)^T \in \mathbb{R}^m$, where $m = \sum_{i=1}^n m_i$ is the total number of joints. Let the inertial frame, $\mathcal{P}$, be fixed on the palm of the hand, and a  fingertip frame, $\mathcal{F}_i$, fixed at the point $\bm{p}_{f_i} \in \mathbb{R}^3$. The translational and rotational velocities of $\mathcal{F}_i$ with respect to $\mathcal{P}$ are denoted $\bm{v}_{f_i}, \bm{\omega}_{f_i} \in \mathbb{R}^3$, respectively. The contact frame, $\mathcal{C}_i$, is located at the contact point, $\bm{p}_{c_i} \in \mathbb{R}^3$.  A visual representation of the contact geometry for the $i$th finger is shown in Figure \ref{fig.contactpic}. 

\begin{figure}[hbtp]
\centering
\includegraphics[scale=0.3]{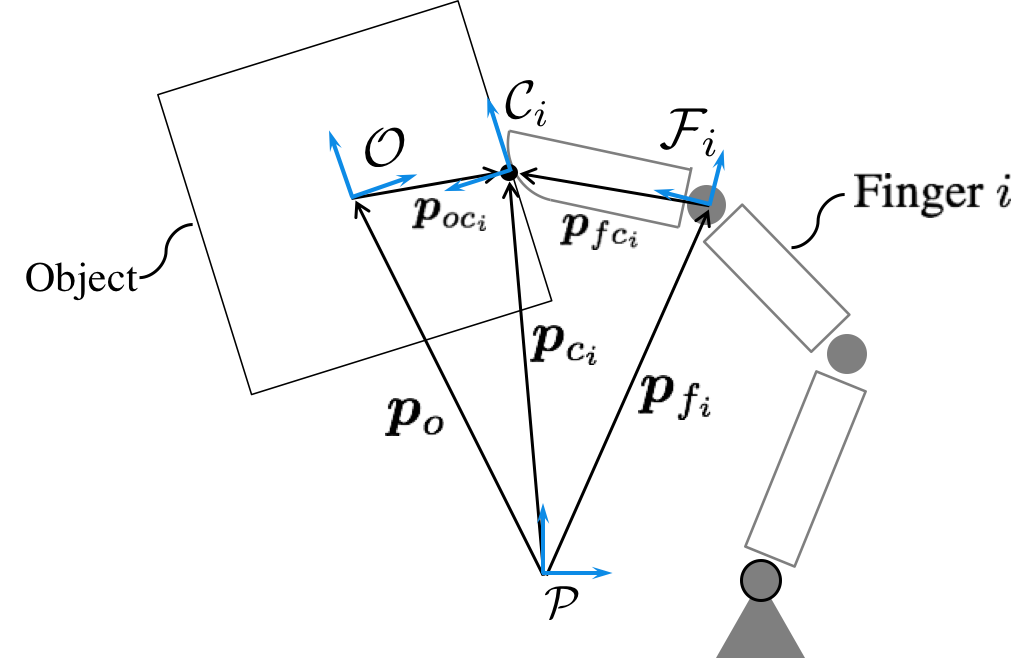}
\caption{A visual representation of the contact geometry for contact $i$. }  \label{fig.contactpic}
\end{figure}

Let $\mathcal{O}$ be a reference frame fixed at the object center of mass $\bm{p}_o \in \mathbb{R}^3$, and $R_{po} \in  SO(3)$ is the rotation matrix, which maps from $\mathcal{O}$ to $\mathcal{P}$. The respective inertial translation and rotational velocities of the object are $\bm{v}_o, \bm{\omega}_o \in \mathbb{R}^3$.  The object state is $\bm{x}_o \in \mathbb{R}^6$, with $\bmd{x}_o = (\bm{v}_o, \bm{\omega}_o)$. The position vector from $\mathcal{O}$ to the respective contact point is $\bm{p}_{oc_i} \in \mathbb{R}^3$.

The hand/object dynamics are respectively defined as \cite{Murray1994}:
\begin{equation} \label{eq:hand dynamics}
M_h \bmdd{q} + C_h \bmd{q} = -J_h^T \bm{f}_c +  \bm{\tau}_e + \bm{u}
\end{equation}
\begin{equation} \label{eq:object dynamics}
M_o\bmdd{x}_o + C_o \bmd{x}_o = G \bm{f}_c + \bm{w}_e
\end{equation}
where $M_h := M_h(\bm{q}) \in \mathbb{R}^{m\times m}, M_o := M_o(\bm{x}_o) \in \mathbb{R}^{6\times 6}$ are the respective hand and object inertia matrices, $C_h :=  C_h(\bm{q},\bmd{q}) \in \mathbb{R}^{m\times m}, C_o := C_o(\bm{x}_o,\bmd{x}_o) \in \mathbb{R}^{6\times 6}$ are the respective hand and object Coriolis and centrifugal matrices, $\bm{\tau}_e :=  \bm{\tau}_e(t) \in \mathbb{R}^m$ is the disturbance torque acting on the joints,  $\bm{w}_e :=  \bm{w}_e(t) \in \mathbb{R}^6$ is an external wrench disturbing the object, and  $\bm{u} \in \mathbb{R}^m$ is the joint torque control input for a fully-actuated hand. The grasp map, $G :=  G(\bm{p}_{oc}) \in \mathbb{R}^{6 \times 3n}$ maps the contact force, $\bm{f}_c$, to the net wrench acting on the object. The hand Jacobian, $J_h := J_h(\bm{q},\bm{p}_{fc}) \in \mathbb{R}^{3n \times m}$,  relates the motion of the hand and velocity of the contact points. The hand Jacobian is a block diagonal matrix of the individual hand Jacobian matrices:
\begin{equation}
J_{h_i}(\bm{q}_i,\bm{p}_{fc_i}) = \bracketmat{cc}{I_{3\times 3} & -(\bm{p}_{fc_i})\times} J_{s_i}(\bm{q}_i)
\end{equation}
 where $J_{s_i}(\bm{q}_i) \in \mathbb{R}^{6 \times mi}$ is the spatial manipulator Jacobian that maps $\bmd{q}_i \mapsto (\bm{v}_{f_i}, \bm{\omega}_{f_i})$ \cite{Murray1994}.

 When the contact points do not slip, the following grasp constraint holds \cite{Cole1989}:
\begin{equation}\label{eq:grasp constraint}
J_h \bmd{q} = G^T \bmd{x}_o
\end{equation}

The following assumptions are made for the grasp:

\begin{assumption}\label{asm:square Jh}
The multi-fingered hand has $m \geq 3 n$ joints.
\end{assumption}

\begin{assumption}\label{asm:full rank G}
The given multi-fingered grasp has $n>2$ contact points, which are non-collinear.
\end{assumption}

\begin{assumption}\label{asm:smooth surfaces}
The local fingertip and object contact surfaces are smooth.
\end{assumption}

\begin{remark} \label{rm:Jh to slip}
 Assumption \ref{asm:full rank G} ensures $G$ is always full rank \cite{Cole1989}, and can be ensured by a high-level grasp planner \cite{Hang2016}.
\end{remark}

\subsection{Hand-Contact Kinematics}\label{ssec:geometric parameters}

Here we review the differential geometric modeling of rolling contacts from \cite{Murray1994}. Note, the subscript $co$ will refer to the object surface of the contact, and the subscript $cf$ refers to the fingertip surface of the contact. At each contact point, we parameterize the contact surfaces by local coordinates  $\bm{\xi}_{co_i} = (a_{co_i}, b_{co_i}), \bm{\xi}_{cf_i} = (a_{cf_i}, b_{cf_i})$. The relation between the local coordinates and contact position vectors are defined by smooth mappings: $\bm{p}^{\mathcal{F}_i}_{fc_i} = \bm{c}_{cf_i}(\bm{\xi}_{cf_i}), \bm{p}^{\mathcal{O}}_{oc_i} = \bm{c}_{co_i}(\bm{\xi}_{co_i})$. The angle between $\frac{\partial \bm{c}_{co_i} }{\partial a_{co_i}}$ and  $\frac{\partial \bm{c}_{cf_i} }{\partial a_{cf_i}}$ is $\psi_i \in \mathbb{R}$.

The geometric parameters including the metric tensor, curvature tensor, and torsion tensor are used to define the rolling contact kinematics. For ease of notation, $\bm{c}_{fa}, \bm{c}_{fb}$ respectively denote $ \frac{\partial \bm{c}_{cf_i}}{\partial a_{cf_i}}$ and  $\frac{\partial \bm{c}_{cf_i}}{\partial b_{cf_i}}$. The Gauss frame is used to define the contact frame $\mathcal{C}_i$:

\begin{equation}\label{eq:gauss frame}
R_{fc_i} = \bracketmat{ccc}{\bm{\rho}_1 & \bm{\rho}_2 & \bm{\rho}_3} = \bracketmat{ccc}{  \frac{\bm{c}_{fa}}{|| \bm{c}_{fa} ||} & \frac{\bm{c}_{fa}}{|| \bm{c}_{fb} ||} & \bm{n}}
\end{equation}

where $R_{fc_i} \in SO(3)$ maps $\mathcal{C}_i$ to $\mathcal{F}_i$ and

\begin{equation}\label{eq:normal vector gauss frame}
\bm{n} = \frac{\bm{c}_{fa} \times \bm{c}_{fb}}{ || \bm{c}_{fa} \times \bm{c}_{fb} || }
\end{equation}

The metric tensor, $M_{cf_i} := M_{cf_i}(\bm{\xi}_{cf_i}) \in \mathbb{R}^{2\times 2}$, curvature tensor, $K_{cf_i} := K_{cf_i}(\bm{\xi}_{cf_i}) \in \mathbb{R}^{2\times 2}$, and torsion tensor, $T_{cf_i} := T_{cf_i}(\bm{\xi}_{cf_i})  \in \mathbb{R}^{2\times 1}$ are defined by:

\begin{equation}\label{eq:metric tensor}
M_{cf_i} = \bracketmat{cc}{ || \bm{c}_{fa} || & 0 \\ 0 & ||\bm{c}_{fb} || }
\end{equation}

\begin{equation}\label{eq:curvature tensor}
K _{cf_i}= \bracketmat{c}{\bm{\rho}_1^T \\ \bm{\rho}_2^T} \bracketmat{cc}{ \frac{\partial \bm{n}/\partial a_{cf_i}}{|| \bm{c}_{fa} ||} & \frac{\partial \bm{n}/\partial b_{cf_i}}{|| \bm{c}_{fb} ||}}
\end{equation}

\begin{equation}\label{eq:torsion tensor}
T_{cf_i} = \bm{\rho}_2^T \bracketmat{cc}{  \frac{\partial \bm{\rho}_1 / \partial a_{cf_i}}{|| \bm{c}_{fa} ||}  & \frac{\partial \bm{\rho}_1/\partial b_{cf_i}}{|| \bm{c}_{fb} ||}}
\end{equation}

Note the metric tensor $M_{co_i} := M_{co_i}(\bm{\xi}_{co_i}) \in \mathbb{R}^{2\times 2}$, curvature tensor, $K_{co_i} := K_{co_i}(\bm{\xi}_{co_i}) \in \mathbb{R}^{2\times 2}$, and torsion tensor, $T_{co_i} := T_{co_i}(\bm{\xi}_{co_i})  \in \mathbb{R}^{2\times 1}$ for the object can be defined by substituting $\bm{\xi}_{cf_i}$ with $\bm{\xi}_{co_i}$ in \eqref{eq:gauss frame}-\eqref{eq:torsion tensor}. Now we can define the dynamics governing the contact parameters $\bm{\xi}_{cf}$ and $\bm{\xi}_{co}$ by:

\begin{equation}\label{eq:contact kinematics}
\bmd{\xi}_{cf_i} = H_{cf_i} R_{c_i p} (\bm{\omega}_{f_i} - \bm{\omega}_o)
\end{equation}
\begin{equation}
\bmd{\xi}_{co_i} =H_{co_i} R_{c_i p}(\bm{\omega}_{f_i} - \bm{\omega}_o)
\end{equation}
where 
\begin{equation}\label{eq:contact derivative matrix H}
H_{cf_i}= M_{cf_i}^{-1} ( K_{cf_i} + R_{\psi_i} K_{co_i} R_{\psi_i} )^{-1} \bracketmat{ccc}{0 & -1 & 0 \\ 1 & 0 & 0},
\end{equation}
\begin{equation}
H_{co_i} =  M_{co_i}^{-1} R_{\psi_i}( K_{cf_i} + R_{\psi_i} K_{co_i} R_{\psi_i})^{-1}\bracketmat{ccc}{0 & -1 & 0 \\ 1 & 0 & 0}
\end{equation} 
 
\begin{equation}\label{eq:R_psi}
R_{\psi_i} = \bracketmat{cc}{ \cos(\psi_i) & -\sin(\psi_i) \\ -\sin(\psi_i) & -\cos(\psi_i)},
\end{equation}
\begin{equation}
\dot{\psi}_i = T_{cf_i} M_{cf_i} \bmd{\xi}_{cf_i} + T_{co_i} M_{co_i} \bmd{\xi}_{co_i}
\end{equation}

and $R_{c_i p} := R_{c_i p}(\bm{\xi}_{f_i}, \bm{q}_i ) \in SO(3)$ maps $\mathcal{P}$ to $\mathcal{C}_i$.

The chosen parameterizations  must satisfy \cite{Murray1994}:
\begin{assumption}\label{asm:geometric parameterization}
The parameterizations are orthogonal such that $\bm{c}_{fa}^T \bm{c}_{fb} = 0$, $\bm{c}_{oa}^T \bm{c}_{ob} = 0$, and $M_{cf_i}$, $K_{cf_i}$, $T_{cf_i}$, $M_{co_i}$, $K_{co_i}$, $T_{co_i}$ are defined for all $\bm{\xi}_{cf_i}$ on the fingertip surface, and $\bm{\xi}_{co_i}$ on the object surface, respectively.
\end{assumption}

\section{Zeroing Control Barrier Functions for Relative Degree Two Systems}\label{ssec:control barrier fcns}

Control barrier functions provide a formal method to ensure constraint satisfaction of dynamical systems. Zeroing control barrier functions is a subset of control barrier functions, which are known to be robust to modeling errors and conducive to real-time applications \cite{Ames2017}. Here we extend the existing work of \cite{Ames2017} to relative degree two systems for application in robotic grasping and manipulation.

Consider the following nonlinear affine system:
\begin{equation}\label{eq:nonlinear affine dynamics}
\bmdd{x} = \bm{f}(\bm{x}, \bmd{x}) + \bm{g}(\bm{x}, \bmd{x}) \bm{u}
\end{equation}
where $\bm{x} \in \mathbb{R}^p$ is the system state, $\bm{u} \in U \subseteq \mathbb{R}^m$ is the control input, and $\bm{f}, \bm{g}$ are locally Lipschitz continuous functions with respect to $\bm{x}$ and $\bmd{x}$. The goal of constraint satisfaction is to ensure the states $\bm{x}$ stay within a set of constraint-admissable states defined by:
\begin{equation}\label{eq:constraint set}
\mathcal{D} = \{ \bm{x} \in \mathbb{R}^p: h(\bm{x}) \geq 0 \}
\end{equation}
where $h: \mathbb{R}^p \to \mathbb{R} $ is a twice-continuously differentiable function of relative degree two.

\begin{definition} \label{def:extended class K}
A continuous function, $\alpha:(-b,a) \to (-\infty,\infty)$ for $a,b \in \mathbb{R}_{>0}$ is an \textit{extended class-$\mathcal{K}$ function} if it is strictly increasing and $\alpha(0) = 0$.
\end{definition} 

Let $B: \mathbb{R}^p \times \mathbb{R}^p \to \mathbb{R}$ be defined by:
\begin{equation}\label{eq:candidate zeroing cbf}
B(\bm{x}, \bmd{x}) = \frac{\partial h}{\partial \bm{x}}\bmd{x} + \alpha_1 (h(\bm{x})) 
\end{equation}
where $\alpha_1$ is an extended class-$\mathcal{K}$ function. Let the set $\mathcal{E}$ be defined by:
\begin{equation}\label{eq:set for zcbf}
\mathcal{E} = (\mathcal{D} \times \mathbb{R}^p) \bigcap \{ (\bm{x}, \bmd{x}) \in \mathbb{R}^p \times \mathbb{R}^p: B(\bm{x}, \bmd{x}) \geq 0 \}
\end{equation}

\begin{definition}\label{def:ZCBF}
Let $h:\mathbb{R}^p \to \mathbb{R}$ by a twice-continuously differentiable, relative degree two function for the system \eqref{eq:nonlinear affine dynamics}. A continuously differentiable function $B: \mathbb{R}^p \times \mathbb{R}^p \to \mathbb{R} $ defined by \eqref{eq:candidate zeroing cbf}, is a \textit{zeroing control barrier function} for the set $\mathcal{E}$ if there exists an extended class-$\mathcal{K}$ function $\alpha_2$ such that for all $(\bm{x}, \bmd{x}) \in  \mathcal{E}$,
\begin{equation}
\underset{\bm{u} \in U}{\text{sup}} \{ L_f B + L_g B \bm{u}  + \alpha_2(B) \} \geq 0 
\end{equation}
\end{definition}

Given a zeroing control barrier function $B$, for all $(\bm{x}, \bmd{x}) \in \mathcal{E}$ define the set:
\begin{equation}
S_u(\bm{x}, \bmd{x}) = \{ \bm{u} \in U: L_f B + L_g B \bm{u}  + \alpha_2(B) \geq 0 \}
\end{equation}

Constraint satisfaction using zeroing control barrier functions is ensured by the following theorem:

\begin{theorem}\label{thm:zcbf}
Given sets $\mathcal{D}, \mathcal{E}$ defined by \eqref{eq:constraint set}, \eqref{eq:set for zcbf} respectively, for a twice-continuously differentiable, relative degree two function $h$, if $(\bm{x}(0), \bmd{x}(0)) \in  \mathcal{E}$ and $B$ is a zeroing control barrier function, then for any Lipschitz control $\bm{u}:  \mathcal{E} \to U$ such that $\bm{u}(\bm{x}, \bmd{x}) \in S_u(\bm{x}, \bmd{x})$ for the system \eqref{eq:nonlinear affine dynamics}, $\bm{x}$ remains in $ \mathcal{D}$ for all $t > 0$.
\end{theorem}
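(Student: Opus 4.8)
The plan is to reduce the relative-degree-two problem to two nested relative-degree-one (zeroing) arguments, exploiting the cascade structure built into the definition of $B$ in \eqref{eq:candidate zeroing cbf}. The key observation is that $B$ is itself a relative-degree-one function for the second-order system \eqref{eq:nonlinear affine dynamics}: differentiating $B(\bm{x},\bmd{x})$ along trajectories gives $\dot B = L_f B + L_g B\, \bm{u}$, so the control enters $\dot B$ directly. Hence the requirement $\bm{u} \in S_u(\bm{x},\bmd{x})$ is exactly $L_f B + L_g B\,\bm{u} + \alpha_2(B)\geq 0$, which along any closed-loop trajectory reads as the scalar differential inequality $\dot B \geq -\alpha_2(B)$, valid wherever the solution lies in $\mathcal{E}$ --- the set on which both the control $\bm{u}$ and the zeroing-control-barrier-function inequality are defined.

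First I would invoke local existence and uniqueness of the closed-loop solution from $(\bm{x}(0),\bmd{x}(0))\in\mathcal{E}$, which holds because $\bm{f},\bm{g}$ are locally Lipschitz and $\bm{u}$ is Lipschitz, and let $[0,T)$ be the maximal interval on which the solution remains in $\mathcal{E}$. On $[0,T)$ the inequality $\dot B \geq -\alpha_2(B)$ is in force, and I would show $B(t)\geq 0$ there by contradiction. If $B$ became negative, let $t_1$ be the last instant beforehand at which $B(t_1)=0$; on the ensuing interval $B<0$, and since $\alpha_2$ is extended class-$\mathcal{K}$ (strictly increasing with $\alpha_2(0)=0$) we have $\alpha_2(B)<0$, so $\dot B \geq -\alpha_2(B)>0$. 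Thus $B$ is strictly increasing exactly where it is negative, contradicting its having dropped below $B(t_1)=0$. This reproduces the standard zeroing-barrier conclusion at the level of $B$ rather than $h$, and needs only continuity and monotonicity of $\alpha_2$, so no comparison lemma with a uniqueness hypothesis is required.

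Having established $B\geq 0$, the nested second argument recovers the original constraint. Since $B = \dot h + \alpha_1(h)\geq 0$, it follows that $\dot h \geq -\alpha_1(h)$, and because $h(\bm{x}(0))\geq 0$ (from $\bm{x}(0)\in\mathcal{D}$) the identical contradiction argument with $\alpha_1$ in place of $\alpha_2$ gives $h(\bm{x}(t))\geq 0$, i.e.\ $\bm{x}(t)\in\mathcal{D}$, on $[0,T)$. The two conclusions together show the solution satisfies both defining inequalities of $\mathcal{E}$ throughout $[0,T)$, so it cannot leave $\mathcal{E}$; a standard continuation argument then identifies $T$ with the maximal existence time and yields $\bm{x}\in\mathcal{D}$ for all $t>0$.

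The main obstacle I anticipate is the interplay between the two arguments together with the fact that the zeroing-control-barrier-function inequality and the control law are only defined on $\mathcal{E}$. The differential inequality $\dot B\geq -\alpha_2(B)$ holds only while the trajectory is in $\mathcal{E}$, yet remaining in $\mathcal{E}$ requires maintaining both $B\geq 0$ and $h\geq 0$, one of which is precisely the conclusion being sought. I would resolve this by working on the maximal in-$\mathcal{E}$ interval $[0,T)$ and showing neither inequality can be the first to fail: at any candidate first exit time the relevant boundary condition ($\dot B\geq 0$ when $B=0$, and $\dot h = B\geq 0$ when $h=0$) prevents the trajectory from crossing $\partial\mathcal{E}$, so $\mathcal{E}$ is forward invariant. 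Finally, care must be taken that the solution does not escape in finite time while inside $\mathcal{E}$; this either follows from an implicit forward-completeness or boundedness assumption on the states, or must be argued from the problem structure.
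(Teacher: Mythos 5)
Your proof is correct and structurally identical to the paper's: both proceed in two nested stages, first showing that $B \geq 0$ is preserved along the closed-loop trajectory, then using $B = \dot{h} + \alpha_1(h) \geq 0$ to conclude $h \geq 0$, i.e.\ $\bm{x} \in \mathcal{D}$. The difference is in the tools. The paper's proof is citation-based: forward invariance of $\{B \geq 0\}$ is obtained by invoking Corollary 2 of \cite{Ames2017} (a comparison-lemma result), and the second stage is handled by Nagumo's theorem after observing that $\dot{h} \geq -\alpha_1(0) = 0$ on $\partial\mathcal{D}$. You replace both citations with a single elementary monotonicity argument (an extended class-$\mathcal{K}$ function is negative exactly where its argument is, so $\dot{B} > 0$ wherever $B < 0$, and likewise for $h$), which makes the proof self-contained and needs no comparison lemma or tangent-cone machinery. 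You also surface a subtlety the paper passes over: the ZCBF inequality and the controller are defined only on $\mathcal{E}$, so $\dot{B} \geq -\alpha_2(B)$ is in force only while the trajectory is in $\mathcal{E}$, and $B \geq 0$ is itself part of membership in $\mathcal{E}$. The same circularity affects the paper's own appeal to Corollary 2 of \cite{Ames2017}, which requires the ZCBF condition on a set containing $\{B \geq 0\}$, whereas here it holds only on the smaller set $\mathcal{E}$.

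Two caveats on your resolution of that circularity. First, on your maximal in-$\mathcal{E}$ interval $[0,T)$ the bound $B \geq 0$ holds by definition of $\mathcal{E}$, so the contradiction argument you run there is vacuous; the entire burden falls on your closing ``neither constraint can fail first'' step. Second, that closing step --- pointwise $\dot{B} \geq 0$ on $\{B = 0\}$ and $\dot{h} = B \geq 0$ on $\{h = 0\}$ --- is precisely the Nagumo sub-tangentiality condition for the closed set $\mathcal{E}$, and a non-negative derivative at an isolated boundary point does not by itself exclude crossing; to finish rigorously you should either invoke Nagumo's theorem for $\mathcal{E}$ (uniqueness of solutions holds by your Lipschitz hypotheses) or instead define $[0,T)$ by the condition $\bm{x}(t) \in \mathcal{D}$ alone, so that your monotonicity argument for $B$ has genuine content on that interval. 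Your flag about finite escape time is warranted, but it applies equally to the paper, whose conclusion should be read as holding on the interval of existence of the closed-loop solution.
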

\begin{proof}
For $(\bm{x}(0), \bmd{x}(0)) \in \mathcal{E}$, and $\bm{u} \in S_u$, it follows from Corollary 2 of \cite{Ames2017} that the set  $\{ (\bm{x}, \bmd{x}) \in \mathbb{R}^p \times \mathbb{R}^p: B(\bm{x}, \bmd{x}) \geq 0 \} $ is forward invariant. Thus $B \geq 0$ holds for a given extended class-$\mathcal{K}$ function $\alpha_1$. It follows from $B \geq 0$ that for all $\bm{x} \in \partial \mathcal{D}$, $\dot{h}(\bm{x}) \geq -\alpha_1(0) = 0$. By Nagumo's theorem \cite{Blanchini1999}, it follows that $\mathcal{D}$ is forward invariant. 
\end{proof}

\begin{remark}
The use of the two staggered extended class-$\mathcal{K}$ functions in Theorem \ref{thm:zcbf} allows a designer to adjust the performance of the control barrier functions for more aggressive/conservative constraint satisfaction, and takes into consideration bounds on the velocity to prevent large control effort near the constraint boundary. Note that for general nonlinear systems the condition that $(\bm{x}(0), \bmd{x}(0)) \in  \mathcal{E}$ may be restrictive, however in-hand manipulation tasks generally begin from a static grasp, such that satisfaction of $(\bm{x}(0), 0) \in \mathcal{E}$ is trivial.  However the condition does allow for more aggressive initial grasps such as if, for example, the hand catches the object prior to a manipulation task.
\end{remark}

\section{Proposed Solution}\label{sec:proposed solution}

In the following section, the set of constraint admissible grasp states is defined, and the zeroing control barrier function approach developed from Section \ref{ssec:control barrier fcns} is applied to derive the grasp constraints. The grasp constraints are then combined to define the proposed controller. 

\subsection{Constraint-Admissible Grasping States}

Satisfaction of the following condition ensures the object will not slip within the grasp \cite{ShawCortez2018b}:
\begin{equation}\label{eq:no slip condition}
\Lambda(\mu) R_{cp} \bm{f}_c >0 
\end{equation}
where $\mu \in \mathbb{R}_{>0}$ is the friction coefficient, $\Lambda(\mu) \in \mathbb{R}^{l_s \times 3 n}$ is a pyramid of $l_s \in \mathbb{Z}_{>0}$ faces used to a approximate the friction cone \cite{Nahon1992}, and $ R_{cp} \in \mathbb{R}^{3n \times 3n} $ is the block diagonal matrix of all $R_{c_i p}$ for $i \in [1,n]$. Let the set of constraint admissible contact forces be defined as:
\begin{equation}\label{eq:safe contact forces}
\mathcal{D}_{f} = \{ \bm{f}_c \in \mathbb{R}^{3n}: \Lambda(\mu) R_{cp} \bm{f}_c > 0 \}
\end{equation}

Let the set of admissible joint angles be defined by:
 \begin{equation}\label{eq:joint angle limits}
 \begin{split}
  h_{q\text{max}_j}(\bm{q}) &= - \bm{i}_j \bm{q} + q_{\text{max}_j} \geq 0, \forall j\in [1,m] \\
  h_{q\text{min}_j}(\bm{q}) &=  \bm{i}_j \bm{q} - q_{\text{min}_j}  \geq 0, \forall j\in [1,m]
 \end{split}
\end{equation} 
where $\bm{i}_j \in \mathbb{R}^{1\times m}$ is the $j$th row of $I_{m\times m}$ and $q_{\text{max}_j}, q_{\text{min}_j} \in \mathbb{R}_{\geq0}$ define the joint angle limits, which omit singular hand configurations. The set of constraint admissible joint angles is defined by:
\begin{equation}\label{eq:safe joint space}
\mathcal{D}_{q} = \{ \bm{q} \in \mathbb{R}^m: h_{q\text{max}}(\bm{q}) \geq 0, h_{q\text{min}} (\bm{q}) \geq 0 \} 
\end{equation}

The fingertip workspace is addressed here via the geometric modeling of the contact kinematics. Many existing tactile sensors are designed as flat or hemispherical fingertips \cite{Kappassov2015}, which can be appropriately modeled with geometric parameterizations \cite{Murray1994}. The benefit of the geometric modeling is not only that it can be applied to general fingertip shapes, but the fingertip workspace can be defined as simple box constraints: 
\begin{equation}\label{eq:fingertip workspace constraints}
\begin{split}
h_{r_1}(\bm{\xi}_{cf_i}) & = \bracketmat{cc}{1 & 0} \bm{\xi}_{cf_i} - a_{\text{min}} \\
h_{r_2}(\bm{\xi}_{cf_i}) & =  - \bracketmat{cc}{1 & 0} \bm{\xi}_{cf_i} + a_{\text{max}} \\ 
h_{r_3}(\bm{\xi}_{cf_i}) & =  \bracketmat{cc}{0 & 1} \bm{\xi}_{cf_i} - b_{\text{min}} \\
h_{r_4}(\bm{\xi}_{cf_i}) & =  - \bracketmat{cc}{0 & 1} \bm{\xi}_{cf_i} + b_{\text{max}} 
\end{split}
\end{equation}
where $a_{\text{min}}, a_{\text{max}}, b_{\text{min}}, b_{\text{max}} \in \mathbb{R}$ define the fingertip surface, and each $h_{r_j}$ define the box constraints such that if $h_{r_j} \geq 0, \forall j \in [1,4]$, the contact point is in the fingertip workspace. The set of allowable contact locations is $\mathcal{D}_r = \mathcal{D}_{r_1} \times ... \times \mathcal{D}_{r_n}$ for:
\begin{equation}\label{eq:safe conact locations}
\mathcal{D}_{r_i} = \{ \bm{\xi}_{cf_i} \in \mathbb{R}^2: h_{r_j}(\bm{\xi}_{cf_i}) \geq 0, \forall j \in [1,4] \}
\end{equation}

Let $\mathcal{H} =  \mathcal{D}_{f} \times \mathcal{D}_q \times \mathcal{D}_r$ denote the set of grasp constraint admissible states. In the set $\mathcal{H}$, the hand configuration is non-singular and thus by Assumption \ref{asm:square Jh} $J_h$ is full rank with rank $3n$. Furthermore, the contact points do not slip in $\mathcal{H}$ and so the grasp constraint \eqref{eq:grasp constraint} holds. Differentiation of \eqref{eq:grasp constraint}, and and substitution of \eqref{eq:hand dynamics} and \eqref{eq:object dynamics}  provides an expression for the contact forces as a function of the control torque, $\bm{u}$:

\begin{multline}\label{eq:contact force}
\bm{f}_c = B_{ho}^{-1}  \Big( J_h M_h^{-1} ( -C_h \bmd{q} + \bm{u} + \bm{\tau}_e) + \dot{J}_h \bmd{q} -\dot{G}^T  \bmd{x}_o\\
 + G^T M_o^{-1}( C_o \bmd{x}_o - \bm{w}_e) \Big)
\end{multline}
where $B_{ho} = ( J_h M_h^{-1} J_h^T + G^T M_o^{-1} G )$. Note that by Assumptions \ref{asm:square Jh} and \ref{asm:full rank G}, the inversion of $B_{ho}$ is well defined. 

\subsection{Proposed Controller}

The following constraints are developed to ensure forward invariance of $\mathcal{H}$ with respect to the states $(\bm{f}_c, \bm{q}, \bm{\xi}_f)$.

The no slip constraint is defined by substitution of \eqref{eq:contact force} into \eqref{eq:no slip condition} \cite{ShawCortez2018b}:
\begin{multline}\label{eq:no slip constraint}
\Lambda(\mu) R_{cp} B_{ho}^{-1} J_h M_h^{-1} \bm{u}   > \Lambda(\mu) R_{cp} B_{ho}^{-1} J_h M_h^{-1} (C_h \bmd{q} \\
- \bm{\tau}_e) - \dot{J}_h \bmd{q} + \dot{G}^T \bmd{x}_o - G^T M_o^{-1}( C_o \bmd{x}_o - \bm{w}_e) 
\end{multline}
Satisfaction of \eqref{eq:no slip constraint} by $\bm{u}$ directly ensures forward invariance of $ \mathcal{D}_{f}$. Let the set of admissible control torques for $\mathcal{D}_f $ be $S_{u_f} = \{ \bm{u} \in \mathbb{R}^m: \eqref{eq:no slip constraint} \text{ holds} \}$.

The zeroing control barrier functions from Section \ref{ssec:control barrier fcns} are used here to guarantee that the hand joints remain inside a desired joint space to prevent over-extension. Let the zeroing control barrier functions be defined by:
\begin{equation} \label{eq:joint angle zcbfs}
\begin{split}
B_{q\text{max}_j}(\bm{q}, \bmd{q}) &= \dot{h}_{q\text{max}_j} + \alpha_1(h_{q\text{max}_j}), \forall j \in [1,m] \\
B_{q\text{min}_j}(\bm{q}, \bmd{q}) &= \dot{h}_{q\text{min}_j} + \alpha_1(h_{q\text{min}_j}), \forall j \in [1,m] 
\end{split}
\end{equation}
where $\alpha_1(h)$ is an extended class-$\mathcal{K}$ function. Let $\mathcal{E}_q$ be defined by:
\begin{multline}
\mathcal{E}_q = (\mathcal{D}_q \times \mathbb{R}^m) \bigcap \{ (\bm{q}, \bmd{q}) \in \mathbb{R}^m \times \mathbb{R}^m: B_{q\text{max}_j} \geq 0, \\
B_{q\text{min}_j} \geq 0, \forall j \in [1,m] \} 
\end{multline}

Following Theorem \ref{thm:zcbf} and by using the dynamics \eqref{eq:hand dynamics} and contact force relation \eqref{eq:contact force}, if the control torque satisfies the following constraint for a given $\alpha_2(h)$ extended class-$\mathcal{K}$ function, then $\mathcal{D}_q$ is rendered forward invariant:
\begin{equation}\label{eq:joint position constraint}
A_q \bm{u} \geq \bm{b}_q
\end{equation} 
where $A_q \in \mathbb{R}^{2m \times m}$ and $ \bm{b}_q \in \mathbb{R}^{2m}$ are the respective concatenations of $L_g B_{q\text{max}_j}, L_g B_{q\text{min}_j}, j \in [1,m]$ and $-L_f B_{q\text{max}_j} - \alpha_2(B_{q\text{max}_j}), -L_f B_{q\text{min}_j} - \alpha_2(B_{q\text{min}_j}) \forall j \in [1,m]$. Let the set of admissable control torques for $\mathcal{D}_q$ be $S_{u_q} = \{ \bm{u} \in \mathbb{R}^m: A_q \bm{u} \geq \bm{b}_q \}$.

The zeroing control barrier functions are also used to ensure the contact points remain in the fingertip workspace. We define the candidate zeroing control barrier functions:
\begin{equation}\label{eq:control barrier function}
B_{r_j} ( \bm{\xi}_{f_i}, \bmd{\xi}_{cf_i})  = \dot{h}_{r_j}(\bm{\xi}_{cf_i}) + \alpha_1(h_{r_j}(\bm{\xi}_{cf_i})),  \ \forall j \in [1,4]
\end{equation}
with associated set $\mathcal{E}_r = \mathcal{E}_{r_1} \times ... \times \mathcal{E}_{r_n}$ where
\begin{multline}\label{eq:control barrier safe sets}
\mathcal{E}_{r_i} = ( \mathcal{D}_{r_i} \times \mathbb{R}^4) \bigcap \{ (\bm{\xi}_{cf_i}, \bmd{\xi}_{cf_i}) \in \mathbb{R}^{4}: \\
B_{r_j}( \bm{\xi}_{cf_i}, \bmd{\xi}_{cf_i})   \geq 0,  \ \forall j \in [1,4] \}
\end{multline}

Following Theorem \ref{thm:zcbf}, the following condition must be satisfied for forward invariance of $\mathcal{E}_r$:
\begin{equation}\label{eq:forward invariance condition}
L_f B_{r_j} + L_g B_{r_j}  \bm{u}_i   + \alpha_2 (B_{r_j}) \geq 0,\\
 \ \forall j \in [1,4], \forall i \in [1,n]
\end{equation}

By Assumption \ref{asm:smooth surfaces}, the terms $L_f B_{r_j}, L_g B_{r_j} $ are derived by differentiating \eqref{eq:contact kinematics} as follows:
\begin{multline}\label{eq:contact double derivative}
\bmdd{\xi}_{cf_i} = \frac{d}{dt} \Big[ H_{cf_i} R_{c_i p} \Big] (\bm{\omega}_{f_i} - \bm{\omega}_o)  + H_{cf_i} R_{c_i p}\bracketmat{cc}{0_{3\times 3} & I_{3\times 3}} \\
 \Big( \dot{J}_{s_i} \bmd{q}_i + J_{s_i} \bmdd{q}_i  - \bmdd{x}_o    \Big)
\end{multline}
Substitution of \eqref{eq:hand dynamics}, \eqref{eq:object dynamics}, and \eqref{eq:contact force} into \eqref{eq:contact double derivative}, provides the necessary derivation of $L_f B_{r_j}, L_g B_{r_j}$.

Re-writing \eqref{eq:forward invariance condition}, if the control torque satisfies the following constraint, then $\mathcal{D}_r$ is forward invariant:
\begin{equation}\label{eq:Linear u constraint}
A_r \bm{u} \geq \bm{b}_r 
\end{equation}
where $A_r \in \mathbb{R}^{4k \times m}$ and $\bm{b}_r \in \mathbb{R}^{4k}$ are  the respective column-wise concatenation over the $n$ contact points of:
\begin{equation}
A_{r_i} = \bracketmat{c}{L_g B_{r_1} (\bm{\xi}_{cf_i}, \bmd{\xi}_{cf_i}) \\  ... \\  L_g B_{r_4} (\bm{\xi}_{cf_i}, \bmd{\xi}_{cf_i}) }
\end{equation}
\begin{equation}
\bm{b}_{r_i} = \bracketmat{c}{-L_f B_{r_1} (\bm{\xi}_{cf_i}, \bmd{\xi}_{cf_i}) - \alpha_2(B_{r_1} (\bm{\xi}_{cf_i}, \bmd{\xi}_{cf_i})) \\ ... \\ -L_f B_{r_4} (\bm{\xi}_{cf_i}, \bmd{\xi}_{cf_i})- \alpha_2(B_{r_4} (\bm{\xi}_{cf_i}, \bmd{\xi}_{cf_i})) }
\end{equation}
Let the set of admissible control torques be $S_{u_r} = \{ \bm{u} \in \mathbb{R}^m: A_r \bm{u} \geq \bm{b}_r \}$.

Finally, to ensure the proposed controller is implementable on real systems the following actuator constraint is defined:
\begin{equation}\label{eq:actuator constraint}
\bm{u}_{\text{min}} \leq \bm{u}\leq  \bm{u}_{\text{max}}
\end{equation}
where $\bm{u}_{\text{min}}, \bm{u}_{\text{max}} \in \mathbb{R}^m$, and let $S_{u_\tau} = \{ \bm{u} \in \mathbb{R}^m: \bm{u}_{\text{min}} \leq \bm{u} \leq \bm{u}_{\text{max}} \}$.
Let the set of grasp constraint admissible control torques be $S_u = S_{u_f} \times S_{u_q} \times S_{u_r} \times S_{u_\tau}$. 
\begin{assumption}\label{asm:safe control torques nonempty}
The set of grasp constraint admissible control torques $S_u$ is non-empty.
\end{assumption}

The following proposed controller admits a nominal manipulation controller, $\bm{u}_{\text{nom}} \in \mathbb{R}^m$, and outputs a control torque that stays minimally close to $\bm{u}_{\text{nom}}$, in the 2-norm sense,  while adhering to the grasp constraints. The proposed control law is:
\begin{align} \label{eq:safe control qp}
\begin{split}
\bm{u}^* \hspace{0.1cm} = \hspace{0.1cm} & \underset{\bm{u}}{\text{argmin}}
\hspace{.1cm} (\bm{u} - \bm{u}_{\text{nom}})^T  (\bm{u} - \bm{u}_{\text{nom}}) \\
& \text{s.t.} 
\hspace{0.6cm} \eqref{eq:no slip constraint},  \eqref{eq:joint position constraint}, \eqref{eq:Linear u constraint}, \eqref{eq:actuator constraint}
\end{split}
\end{align}

\begin{proposition}\label{prop:Main}
Suppose Assumptions \ref{asm:square Jh}-\ref{asm:safe control torques nonempty} hold and a given $\bm{u}_{nom} \in \mathbb{R}^m$ is locally Lipschitz continuous. If $\bm{f}_c(0) \in \mathcal{D}_f$, $ (\bm{q}(0), \bmd{q}(0)) \in \mathcal{E}_q$, $ \bm{\xi}_f(0), \bmd{\xi}_f(0)) \in \mathcal{E}_r$, then \eqref{eq:safe control qp} applied to \eqref{eq:hand dynamics} ensures $\mathcal{H}$ is forward invariant.
\end{proposition}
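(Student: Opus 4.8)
The plan is to decompose the claim along the three factors of $\mathcal{H} = \mathcal{D}_f \times \mathcal{D}_q \times \mathcal{D}_r$, show that the single closed-loop control $\bm{u}^*$ renders each factor invariant, and then use that a Cartesian product of forward-invariant sets under one flow is forward invariant. The backbone is Theorem \ref{thm:zcbf}: whenever $\bm{u}^*$ lies in the admissible set associated with a given constraint, the corresponding set is invariant. Since the program (\ref{eq:safe control qp}) enforces (\ref{eq:no slip constraint}), (\ref{eq:joint position constraint}), (\ref{eq:Linear u constraint}), and (\ref{eq:actuator constraint}) simultaneously, its minimizer $\bm{u}^*$ belongs to $S_u = S_{u_f} \times S_{u_q} \times S_{u_r} \times S_{u_\tau}$ pointwise, so each individual invariance argument applies to the same trajectory.

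First I would establish existence, uniqueness, and regularity of the closed-loop. By Assumption \ref{asm:safe control torques nonempty} the feasible set of (\ref{eq:safe control qp}) is non-empty, and the objective is strictly convex in $\bm{u}$, so the minimizer $\bm{u}^*$ is unique. To apply Theorem \ref{thm:zcbf}, which requires a Lipschitz control, I would argue that $\bm{u}^*$ is locally Lipschitz in the state: the constraint data ($A_q,\bm{b}_q,A_r,\bm{b}_r$, the friction-pyramid rows, the bounds in (\ref{eq:actuator constraint})) are built from the smooth kinematic and dynamic quantities of Section \ref{sec:system model} together with $B_{ho}^{-1}$, which is well defined on $\mathcal{H}$ by Assumptions \ref{asm:square Jh} and \ref{asm:full rank G}, while $\bm{u}_{nom}$ is Lipschitz by hypothesis. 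Lipschitz dependence of the minimizer of a strictly convex parametric quadratic program on Lipschitz data then follows from standard parametric-optimization results, and yields a unique closed-loop solution on a maximal interval.

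Next I would treat the three constraints in turn on that solution. For slipping, (\ref{eq:contact force}) writes $\bm{f}_c$ as an algebraic function of $\bm{u}$ and the state valid on $\mathcal{H}$, and (\ref{eq:no slip constraint}) is exactly $\Lambda(\mu) R_{cp}\bm{f}_c > 0$ rewritten through (\ref{eq:contact force}); since $\bm{u}^* \in S_{u_f}$ for all $t$, the contact force satisfies (\ref{eq:no slip condition}) pointwise and never leaves $\mathcal{D}_f$. For over-extension, $h_{q\text{max}_j},h_{q\text{min}_j}$ from (\ref{eq:joint angle limits}) are relative-degree-two outputs of (\ref{eq:hand dynamics}) with ZCBFs (\ref{eq:joint angle zcbfs}); with $(\bm{q}(0),\bmd{q}(0))\in\mathcal{E}_q$ and $\bm{u}^*\in S_{u_q}$ satisfying (\ref{eq:joint position constraint}), Theorem \ref{thm:zcbf} renders $\mathcal{D}_q$ invariant. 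For excessive rolling, the double derivative (\ref{eq:contact double derivative}) with (\ref{eq:contact force}) makes each $h_{r_j}$ a relative-degree-two output with ZCBFs (\ref{eq:control barrier function}); with $(\bm{\xi}_f(0),\bmd{\xi}_f(0))\in\mathcal{E}_r$ and $\bm{u}^*\in S_{u_r}$ satisfying (\ref{eq:Linear u constraint}), Theorem \ref{thm:zcbf} renders $\mathcal{D}_r$ invariant. Combining the three conclusions on the common trajectory yields invariance of $\mathcal{H}$.

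The main obstacle I anticipate is twofold and intertwined. The delicate analytic point is the Lipschitz regularity of the minimizer, because the active constraint gradients (for instance the $L_g B_{r_j}$ and the rows of $A_q$) may become linearly dependent and active-set changes can destroy Lipschitzness; a constraint-qualification argument tied to Assumption \ref{asm:safe control torques nonempty} is essential here. The structural point is the apparent circularity: the force expression (\ref{eq:contact force}) and invertibility of $B_{ho}$ hold only inside $\mathcal{H}$, which is what is being proven. I would resolve this with a maximal-interval argument — the conclusions of Theorem \ref{thm:zcbf} hold on any interval on which the closed-loop is defined and stays in $\mathcal{H}$, and invariance prevents the trajectory from reaching $\partial\mathcal{H}$, so the solution cannot exit $\mathcal{H}$ in finite time and the interval extends, keeping all expressions well defined throughout.
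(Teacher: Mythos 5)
Your proposal is correct and follows essentially the same route as the paper, which in fact states Proposition \ref{prop:Main} without a separate proof: the intended argument is precisely the Section \ref{sec:proposed solution} construction, where satisfaction of \eqref{eq:no slip constraint} directly enforces $\bm{f}_c \in \mathcal{D}_f$, Theorem \ref{thm:zcbf} applied to \eqref{eq:joint position constraint} and \eqref{eq:Linear u constraint} renders $\mathcal{D}_q$ and $\mathcal{D}_r$ forward invariant, and the quadratic program \eqref{eq:safe control qp} enforces all of these constraints simultaneously on the same trajectory, so $\mathcal{H}$ is forward invariant. The two obstacles you flag --- local Lipschitz continuity of the QP minimizer (required to invoke Theorem \ref{thm:zcbf}) and the circularity that \eqref{eq:contact force} and the invertibility of $B_{ho}$ are only guaranteed on $\mathcal{H}$ itself --- are genuine gaps the paper passes over silently, and your parametric-QP regularity argument together with the maximal-interval resolution supplies exactly what a fully rigorous version of the paper's argument would need.
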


\section{Results} \label{sec: results}

Numerical simulations compare a nominal manipulation controller from the literature with the same controller implemented via the proposed control \eqref{eq:safe control qp}. The controllers are implemented as if in a hierarchical grasp framework where a reference command is provided to the robotic hand from a high-level planner. The nominal controller is the computed torque control \cite{Murray1994, Cole1989} along with a conventional internal force control \cite{Ozawa2017} of the form: 
\begin{equation}\label{eq:nominal control}
\bm{u}_{\text{nom}} = J_h^T( G^\dagger \bm{u}_m + \bm{u}_f) - \bm{\tau}_e
\end{equation}
\begin{equation}\label{eq:nominal manipulation control}
\bm{u}_m = M_{ho} P ( \bmdd{r} +K_p \bm{e} + K_d \bmd{e} ) + C_{ho} \bmd{x}_o - \bm{w}_e
\end{equation}
\begin{equation} \label{eq:nominal internal force control}
\bm{u}_f = k_f (\bmb{p}_c - \bm{p}_{c_1},  \bmb{p}_c- \bm{p}_{c_2}, ...,  \bmb{p}_c - \bm{p}_{c_n} )
\end{equation}
where $\bm{e} \in \mathbb{R}^6$ is the error between the reference and object pose defined using Euler angles as in \cite{Cole1989}, $P \in \mathbb{R}^{6 \times 6}$ maps the object velocity using Euler angles to $\bmd{x}_o$ \cite{Cole1989},  $M_{ho} = M_o + G J_h^{-T} M_h J_h^{-1} G^T$, $C_{ho} = C_o + G J_h^{-T} ( C_h J_h^{-1} G^T + M_h \frac{d}{dt}[ J_h^{-1} G^T ] )$, $K_p, K_d \in \mathbb{R}^{6\times 6}$ are the respective proportional and derivative positive definite control gains, $\bm{r} \in \mathbb{R}^6$ is the reference command, $\bmb{p}_c =  \frac{1}{n}\sum_{i=1}^n \bm{p}_{c_i} $ is the centroid of the inertial contact positions, and $k_f \in \mathbb{R}_{>0}$ is a squeezing scalar gain. The gains chosen for the simulation are $K_p = 1.0* I_{6\times 6}$, $K_d = 2.5* I_{6\times 6}$, $k_f = 10.0$. 

\begin{figure}[h!]
\centering
	\subcaptionbox{Isometric view. \label{fig.initgraspiso}}
		{\includegraphics[scale=.185]{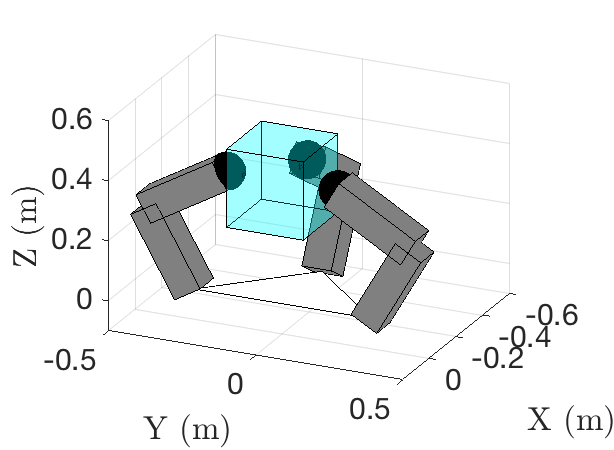}}
	\subcaptionbox{Top view. \label{fig.initgraspside} }
		{\includegraphics[scale=.185]{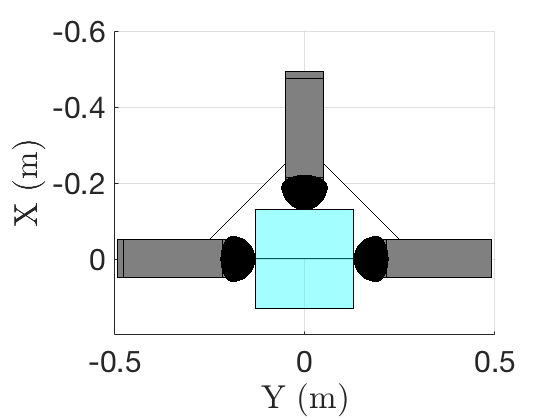}}
	\caption{Simulation setup.} \label{fig.initgrasp}
\end{figure}

\begin{figure}[h!]
\centering
	\subcaptionbox{Failed grasp configuration\label{fig.failgrasp_exp25}}
		{\includegraphics[scale=.21]{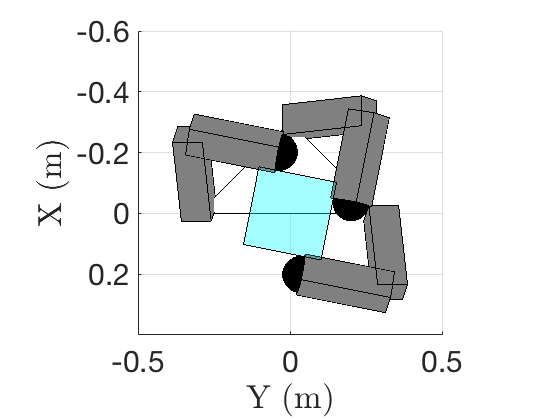}}
	\subcaptionbox{Contact points exceed fingertip workspace \label{fig.contact_traj_exp25_bfi} }
		{\includegraphics[scale=.21]{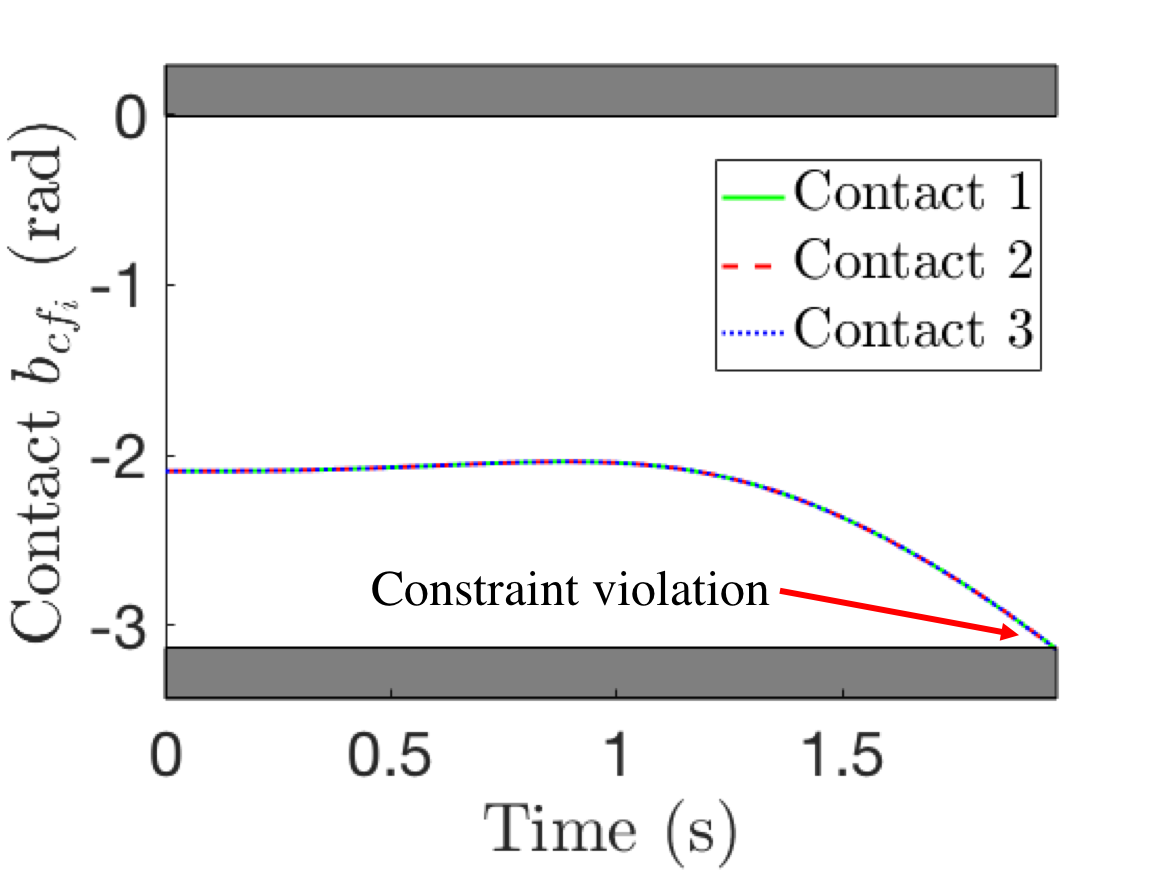}}
		\subcaptionbox{Joint limits exceeded \label{fig.joint_traj_exp25} }
		{\includegraphics[scale=.21]{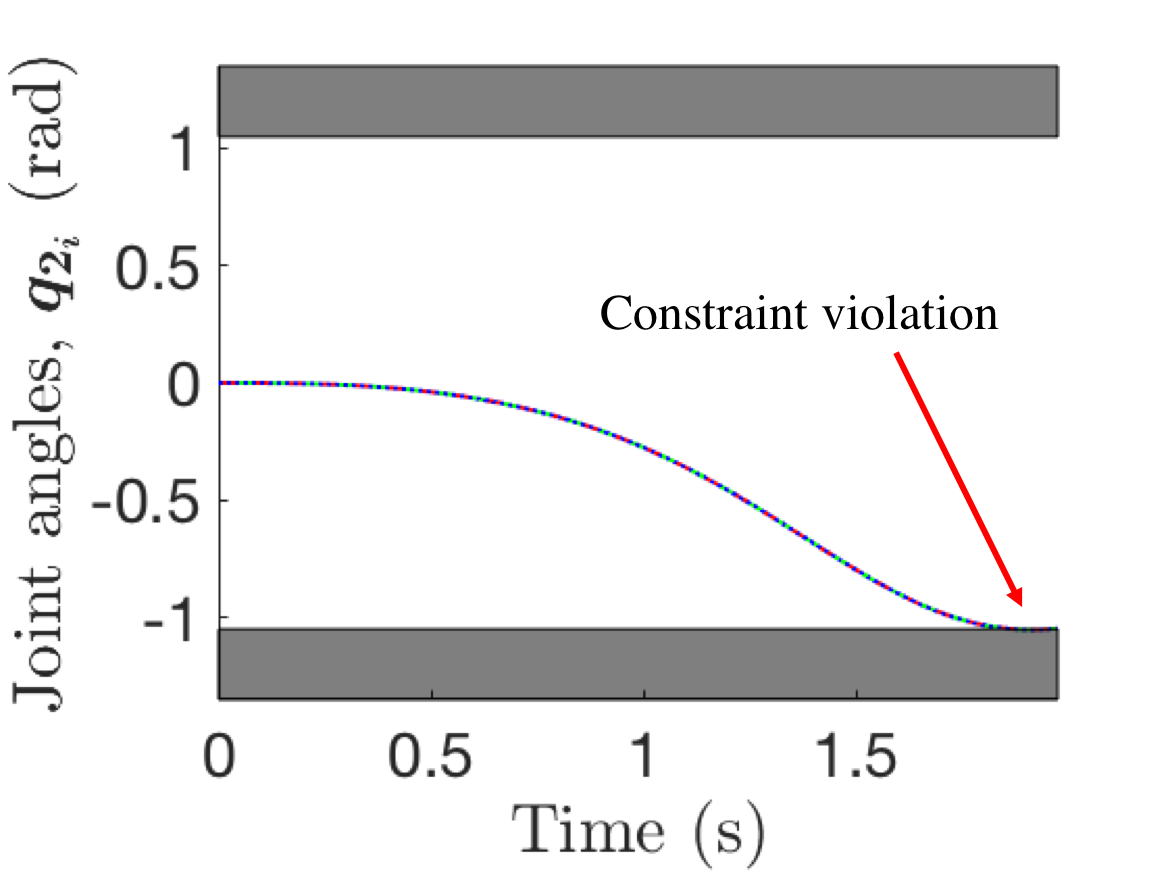}}
		\subcaptionbox{Slip constraint violated \label{fig.friction_exp25} }
		{\includegraphics[scale=.18]{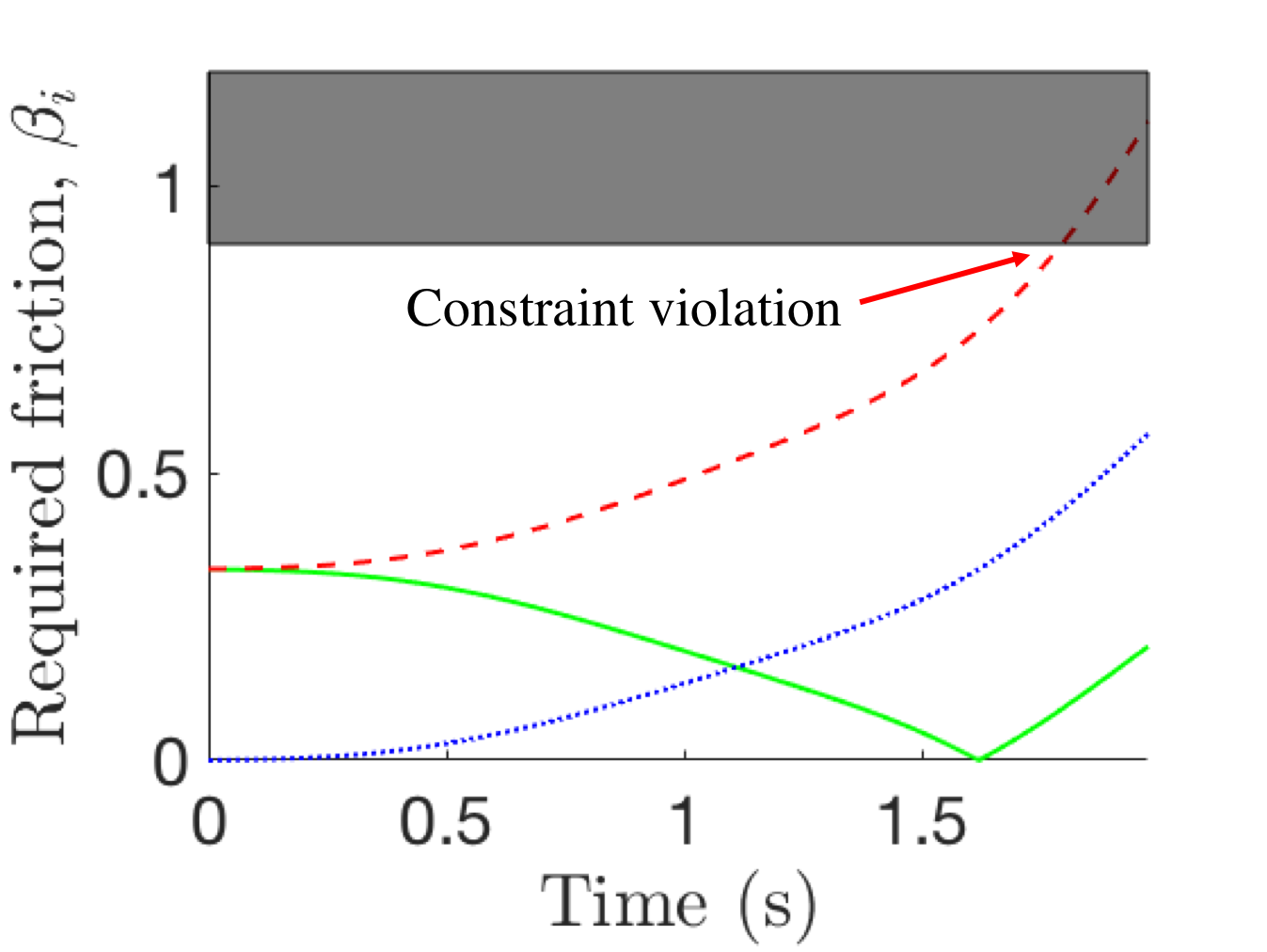}}
	\caption{Implementation of nominal control results in failed grasp due to constraint violations. } \label{fig.fail_grasp_exp25}
\end{figure}

\begin{figure}[h!]
\centering
	\subcaptionbox{\label{fig.contact_traj_exp26_afi}}
		{\includegraphics[scale=.21]{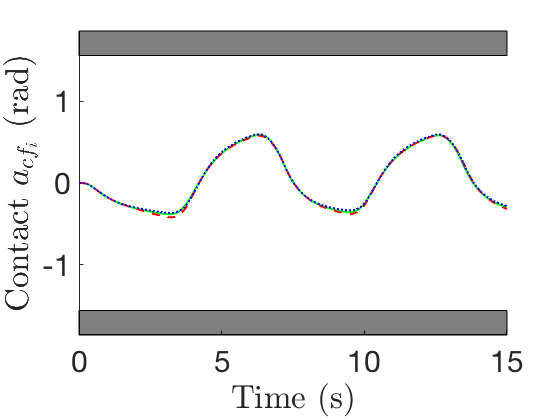}}
	\subcaptionbox{ \label{fig.contact_traj_exp26_bfi} }
		{\includegraphics[scale=.21]{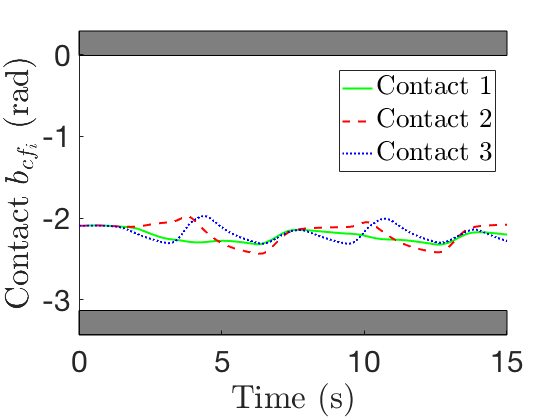}}
	\subcaptionbox{\label{fig.joint_traj_exp26_q1}}
		{\includegraphics[scale=.21]{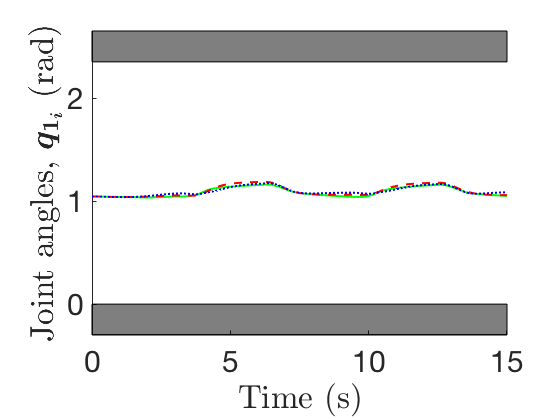}}
	\subcaptionbox{\label{fig.joint_traj_exp26_q2} }
		{\includegraphics[scale=.21]{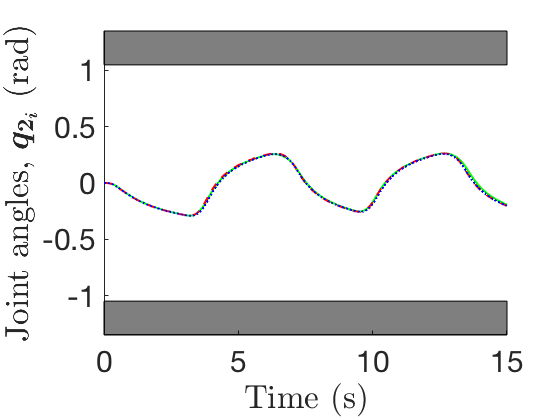}}
		\subcaptionbox{\label{fig.joint_traj_exp26_q3} }
		{\includegraphics[scale=.21]{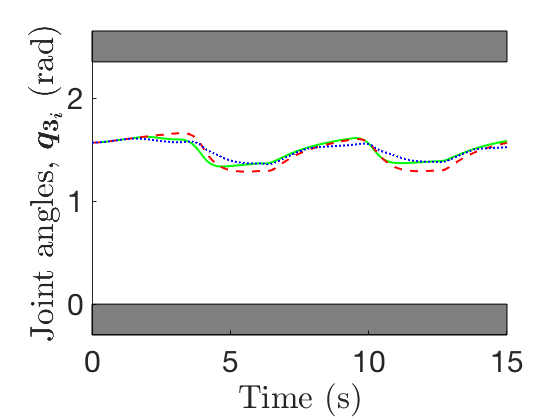}}
		\subcaptionbox{\label{fig.frictionexp26} }
		{\includegraphics[scale=.21]{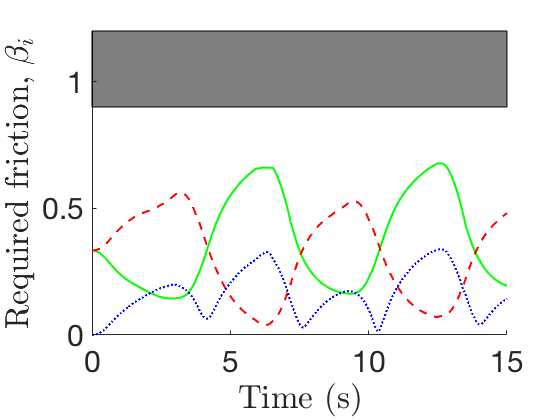}}
	\caption{Implementation of nominal + proposed control ensures grasp constraint satisfaction.} \label{fig.proposed_states}
\end{figure}

The reference $\bm{r} = (0,0,0.25 \cos(t),0,0,2 \cos(t))$ is provided to twist and pull the object about the $Z$-axis of the inertial frame, which is depicted in Figure \ref{fig.initgrasp} along with the initial static hand-object configuration. The grasped object is a $0.11$ kg cube with edge length of $0.2604$ m. The friction coefficient between the object and fingertip is $\mu = 0.9$. The hand is composed of identical rectangular prismatic links of dimension $ 0.3 \times 0.05 \times 0.05 \ \text{m}^3 $ with hemispherical fingertips of radius $R = 0.06$ m. Each finger consists of 2 revolute joints at the base and one between the two links.  The fingertip parameterization chosen to satisfy Assumption \ref{asm:geometric parameterization} is $\bm{c}_{cf_i} = [-R \cos(a_{cf_i}) \cos(b_{cf_i}), R \sin(a_{cf_i}), -R \cos(a_{cf_i})\sin(b_{cf_i})]^T$. The associated box constraints to define the fingertip workspace are: $-\pi/2 < a_{f_i} < \pi/2$, $-\pi < b_{f_i} < 0$. The joint angle limits for each finger are $\bm{q}_{\text{max}_i} = (3 \pi /2, \pi/3, 3 \pi/2), \bm{q}_{\text{min}_i} = (0, -\pi/3, 0), \ \forall i \in [1,3]$. The extended class-$\mathcal{K}$ functions used in the control barrier functions of \eqref{eq:safe control qp} are $\alpha_1(h) = \alpha_2(h) = h^3$. Note the only disturbance acting on the system is gravity. The simulations were performed using Matlab's ode3 integrator. The simulation time was 15 seconds, but simulations were stopped if the contact points exceed the fingertip workspace.

The implementation of the nominal control resulted in a failed grasp as shown in Figure \ref{fig.fail_grasp_exp25}. The gray regions denote areas outside of the constraint admissible state space. Figure \ref{fig.contact_traj_exp25_bfi} shows that as the nominal control rotates the object to track the reference pose, all of the contact trajectories, $b_{cf_i}$, exceed the fingertip workspace resulting in loss of contact. Figure \ref{fig.joint_traj_exp25} shows that the joint angles also exceed the prescribed joint limits. Figure \ref{fig.friction_exp25} shows the required friction, $\beta_i \in \mathbb{R}_{>0}$, which denotes the friction required at each contact to perform the manipulation motion \cite{ShawCortez2018b}. If the required friction $\beta_i$ exceeds the friction coefficient $\mu$, then the contact points slip, which is depicted in Figure \ref{fig.friction_exp25}. Thus the reference provided by the high-level grasp planner is infeasible for the nominal control. 

Figure \ref{fig.proposed_states} shows the grasp states from the proposed control \eqref{eq:safe control qp} with the nominal control \eqref{eq:nominal control}. Figures \ref{fig.contact_traj_exp26_afi} and \ref{fig.contact_traj_exp26_bfi} show the contact trajectories remain inside the fingertip workspace. Figures \ref{fig.joint_traj_exp26_q1}-\ref{fig.joint_traj_exp26_q3} show that the joint angles  remain within the joint angle limits. Figure \ref{fig.frictionexp26} shows that the required friction, $\beta_i$, remains below the slipping region. The combined satisfaction of all grasp constraints validates the forward invariance of the grasp constraint admissible set $\mathcal{H}$ as per Proposition \ref{prop:Main}. Note that the choice of $\alpha_1$ and $\alpha_2$ allows for more aggressive/conservative controller performance. The results show that the proposed controller prevents grasp failure even when the high-level planner provides an infeasible reference command for the given nominal control.

The numerical simulations show promising results for ensuring grasp constraint satisfaction during in-hand manipulation. However the proposed controller \eqref{eq:safe control qp} is only defined if Assumption \ref{asm:safe control torques nonempty} holds. Future work will investigate feasibility conditions of the proposed control.

\section{Conclusion} \label{sec: conclusion}

In this paper, a control law was proposed to guarantee grasp constraint satisfaction during in-hand manipulation. The grasp constraints were derived to ensure the object does not slip, the joints do not exceed joint angle limits, and the contact points do not leave the fingertip workspace. The proposed controller admits an existing manipulation controller, while adhering to the grasp constraints to support the assumptions made in the literature. Numerical results demonstrate the efficacy of the proposed approach.

\bibliographystyle{IEEEtran}
\bibliography{IEEEabrv,ShawCortez_CDC2018}

\end{document}